\documentclass{elsarticle}

\usepackage{graphicx,latexsym}
\usepackage{amsmath,amsfonts,amssymb}
\usepackage{times,color}

\usepackage{epsfig,algorithm,algpseudocode}
\usepackage{array,multirow,graphicx}
\usepackage{siunitx}
\usepackage{adjustbox}
\usepackage{booktabs}
\usepackage{subcaption}

\usepackage{tabularx}
\usepackage{amsthm}

\newtheorem{theorem}{Theorem}

\allowdisplaybreaks

\setlength\arraycolsep{2pt}

\journal{AAAAAA}

\def\beps{\ensuremath{\boldsymbol{\epsilon}}}
\def\bF{\ensuremath{\mathbf{F}}}
\def\bh{\ensuremath{\mathbf{h}}}

\begin{document}

\begin{frontmatter}

\title{Going Deeper with Five-point Stencil Convolutions for Reaction-Diffusion Equations}


\author[a]{Yongho Kim}
\ead{ykim@mpi-magdeburg.mpg.de}

\author[b]{Yongho Choi\corref{mycorrespondingauthor}}
\cortext[mycorrespondingauthor]{Corresponding author}
\ead{yongho\_choi@daegu.ac.kr}
\ead[url]{https://sites.google.com/view/yh-choi}

\address[a]{Faculty of Mathematics, Otto-von-Guericke-Universität Magdeburg, Universitätsplatz 2, 39106, Magdeburg, Germany}
\address[b]{Department of Computer \& Information Engineering, Daegu University, Gyeongsan-si, Gyeongsangbuk-do 38453, Republic of Korea}

\begin{abstract}
Physics-informed neural networks have been widely applied to partial differential equations with great success because the physics-informed loss essentially requires no observations or discretization. However, it is difficult to optimize model parameters, and these parameters must be trained for each distinct initial condition. To overcome these challenges in second-order reaction-diffusion type equations, a possible way is to use five-point stencil convolutional neural networks (FCNNs). FCNNs are trained using two consecutive snapshots, where the time step corresponds to the step size of the given snapshots. Thus, the time evolution of FCNNs depends on the time step, and the time step must satisfy its CFL condition to avoid blow-up solutions. In this work, we propose deep FCNNs that have large receptive fields to predict time evolutions with a time step larger than the threshold of the CFL condition. To evaluate our models, we consider the heat, Fisher's, and Allen--Cahn equations with diverse initial conditions. We demonstrate that deep FCNNs retain certain accuracies, in contrast to FDMs that blow up.
\end{abstract}

\begin{keyword}
Convolutional neural networks  \sep
Data-driven models  \sep
Five-point stencil CNNs  \sep
Finite difference methods  \sep
Reaction-diffusion type equations
\end{keyword}

\end{frontmatter}


\section{Introduction}
Natural and chemical phenomena, as well as some problems in the real world, can be described through mathematical expressions, in particular partial differential equations (PDEs). For example, fluid flow, chemical reaction-diffusion, phase separation, image analysis, image segmentation, cell division, the spread of infectious diseases, etc., can be mathematically expressed. Therefore, to find solutions to PDEs or mathematically analyze the characteristics of phenomena and changes in energy over time that these equations represent, numerical studies aimed at approximating PDE solutions are continuously being conducted such as finite difference method (FDM) \cite{RLV2007,YZ2009,EBKP2011,JKDJSYYC2017,YLJK2017}, finite element method (FEM) \cite{NFJXYY2011,AD2011,CJ2012,NKSMAB2022}, finite volume method (FVM) \cite{QXXXNF2012,EFA2015,SM2015,PZZL2022}, and so on.

Moreover, machine learning/deep learning models have been developed to solve approximately PDE solutions. The application of physics-informed neural networks (PINNs) \cite{pinn} has led to great success in solving partial differential equations because the physics-informed loss does not require any observations or discretization. Also, PINNs achieve acceptable accuracy for diverse simulations \cite{pinn2,pinn3,pinn4,pinn5,pinn6}. However, optimizing model parameters remains a challenge, and PINNs should be trained separately by each initial condition. 

To address these problems, a possible way is to use data-driven models that can learn numerical schemes using snapshots and predict solutions at further time steps. Using a supervised learning approach, convolutional neural networks \cite{cnn} have been widely applied to solve partial differential equations \cite{phygeo,convpde,poicnn} because the mechanism of the convolution operator is similar to numerical methods that utilize neighboring points to obtain values at the next time step. However, the prediction of data-driven machine learning models and numerical methods is strongly affected by the time step of given snapshots, so appropriate time steps should be chosen.

Here, we focus on the \emph{receptive field} \cite{receptivefield,yolov4} which refers to the size of input nodes that affect a single output node.
Modern convolutional neural networks \cite{vgg, resnet, effnet} have been designed to acquire large receptive fields for good feature extraction. In other words, the large receptive field increases the capacity of the indirect connectivity between an input and its output so that plenty of the input nodes are involved in the output extraction. In FDMs, simulation errors and time steps are also influenced by the receptive field size related to the order of approximations to derivatives (e.g., 5-point stencil vs. 9-point stencil). 

\begin{theorem}[Stability analysis]\label{stability}
The stability condition of 2D heat equation ($\phi(x,y)_t=\phi_{xx}+\phi_{yy}$) is $h^2 /4$, where $h=\frac{1}{\Delta x}=\frac{1}{\Delta y}$.
\end{theorem}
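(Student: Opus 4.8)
The plan is to run a von Neumann (discrete Fourier) stability analysis of the forward-time, centered-space scheme that underlies the five-point stencil for $\phi_t=\phi_{xx}+\phi_{yy}$. Writing $r_x=\Delta t/(\Delta x)^2$ and $r_y=\Delta t/(\Delta y)^2$, the explicit update is
\begin{equation*}
\phi_{i,j}^{n+1} = \phi_{i,j}^n + r_x\bigl(\phi_{i+1,j}^n - 2\phi_{i,j}^n + \phi_{i-1,j}^n\bigr) + r_y\bigl(\phi_{i,j+1}^n - 2\phi_{i,j}^n + \phi_{i,j-1}^n\bigr).
\end{equation*}
First I would insert the Fourier mode $\phi_{i,j}^n=\xi^n e^{\mathrm{i}(k_x i\Delta x + k_y j\Delta y)}$ (with $\mathrm{i}=\sqrt{-1}$) and use $e^{\mathrm{i}\theta}-2+e^{-\mathrm{i}\theta}=-4\sin^2(\theta/2)$ to read off the amplification factor
\begin{equation*}
\xi(k_x,k_y) = 1 - 4 r_x \sin^2\!\bigl(\tfrac{k_x\Delta x}{2}\bigr) - 4 r_y \sin^2\!\bigl(\tfrac{k_y\Delta y}{2}\bigr).
\end{equation*}

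Next, stability requires $|\xi(k_x,k_y)|\le 1$ for all admissible wavenumbers. Since each squared sine lies in $[0,1]$ we always have $\xi\le 1$, so the binding requirement is $\xi\ge -1$, and the extreme case occurs when both squared sines equal $1$ simultaneously (the checkerboard mode $k_x\Delta x=k_y\Delta y=\pi$). This gives $1-4r_x-4r_y\ge -1$, i.e.\ $r_x+r_y\le\tfrac12$. Specializing to a uniform mesh with spacing $h$ ($\Delta x=\Delta y=h$) yields $2\Delta t/h^2\le\tfrac12$, hence $\Delta t\le h^2/4$, which is the stated condition.

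As an independent cross-check (and a Fourier-free argument for sufficiency), I would rewrite the update with $r:=\Delta t/h^2$ as $\phi_{i,j}^{n+1}=(1-4r)\phi_{i,j}^n+r(\phi_{i+1,j}^n+\phi_{i-1,j}^n+\phi_{i,j+1}^n+\phi_{i,j-1}^n)$: for $r\le 1/4$ all coefficients are nonnegative and sum to $1$, so $\phi^{n+1}$ is a convex combination of the previous nodal values, giving a discrete maximum principle and max-norm non-growth; for $r>1/4$ the checkerboard mode is amplified by $|1-8r|>1$, so the bound is sharp. The main obstacle — essentially the only delicate point — is justifying that ``$|\xi|\le 1$ for every mode'' is the right notion of stability here; this rests on the scheme being linear with constant coefficients (so the Fourier modes diagonalize the update operator) and on the discrete Fourier transform being an isometry, so that non-growth of each mode is equivalent to non-growth of the full $\ell^2$ solution norm. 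I would state this reliance explicitly and keep the estimate uniform over all wavenumbers, which is exactly what makes $\Delta t\le h^2/4$ both necessary and sufficient.
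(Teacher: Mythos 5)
Your proposal is correct and follows essentially the same route as the paper: a von Neumann analysis of the forward-time five-point scheme, with the growth factor minimized at the checkerboard mode $q\Delta x=r\Delta y=\pi$ (your $-4\sin^2(\theta/2)$ form is identical to the paper's $-2(1-\cos\theta)$), yielding $r_x+r_y\le\tfrac12$ and hence $\Delta t\le h^2/4$ on a uniform mesh. The added maximum-principle cross-check and the explicit justification of the $|\xi|\le 1$ criterion go beyond what the paper writes, but the core argument is the same (note the theorem's ``$h=\frac{1}{\Delta x}$'' is evidently a typo for $h=\Delta x$, which both you and the paper's proof implicitly assume).
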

\begin{proof}
\begin{align}
&\frac{\phi_{ij}^{n+1}-\phi_{ij}^n}{\Delta t} = \frac{\phi_{i+1,j}^{n}-2\phi_{ij}^n+\phi_{i-1,j}^{n}}{(\Delta x)^2}+\frac{\phi_{i,j+1}^{n}-2\phi_{ij}^n+\phi_{i,j-1}^{n}}{(\Delta y)^2}. \nonumber
\end{align}
$\phi(x,y,t_n)=e^{iqx}e^{iry}$ then,
\begin{align}
&\frac{Gf-1}{\Delta t}=\frac{e^{iq\Delta x}+e^{-iq\Delta x}-2}{(\Delta x)^2}+\frac{e^{ir\Delta y}+e^{-ir\Delta y}-2}{(\Delta y)^2}, \nonumber
\end{align}
where $Gf$ is a growth factor defined as
\begin{align}
&Gf=1-2 \frac{\Delta t}{(\Delta x)^2}(1-\cos(q\Delta x))-2\frac{\Delta t}{(\Delta y)^2}(1-\cos(r\Delta y)). \nonumber
\end{align}
The worst case is $q\Delta x=r\Delta y = \pi$, then 
\begin{align}
&Gf=1-4\frac{\Delta t}{(\Delta x)^2}-4\frac{\Delta t}{(\Delta y)^2}. \nonumber
\end{align}
Therefore, the stability condition is
\begin{align}
&\frac{\Delta t}{(\Delta x)^2}+\frac{\Delta t}{(\Delta y)^2} \leq \frac{1}{2}. \nonumber
\end{align}
Since $h=\frac{1}{\Delta x}=\frac{1}{\Delta y}$, we obtain
\begin{align}
&\Delta t \leq \frac{h^2}{4}. \label{heat2d_stab}
\end{align}
\end{proof}

Basically, the time step size $\Delta t$ can be decided by Theorem  \ref{stability}, which provides an analysis of the stability range of the explicit scheme for the two-dimensional heat equation. The stability analysis determines the range of suitable time steps that ensures the numerical solution remains stable. In other words, if $\Delta t$ does not satisfy the Eq. \eqref{heat2d_stab}, a blowup could occur. 

In this paper, the main idea of our approach is to utilize a receptive field that permits a time step larger than the threshold of the CFL condition. Therefore, we propose a deep CNN architecture to increase the receptive field size. 
The contents of this paper is as follows: In Section \ref{sec:methods}, we explain our proposed deep five-point stencil convolutional neural networks (deep FCNNs) and algorithms. In Section \ref{sec:results}, we perform numerical simulations for various initial conditions. In Section \ref{sec:discu}, we summarize the paper and discuss a possible research direction.

\section{Methods and numerical solutions} \label{sec:methods}

\begin{figure}[t]
\centering
\includegraphics[width=0.8\columnwidth]{./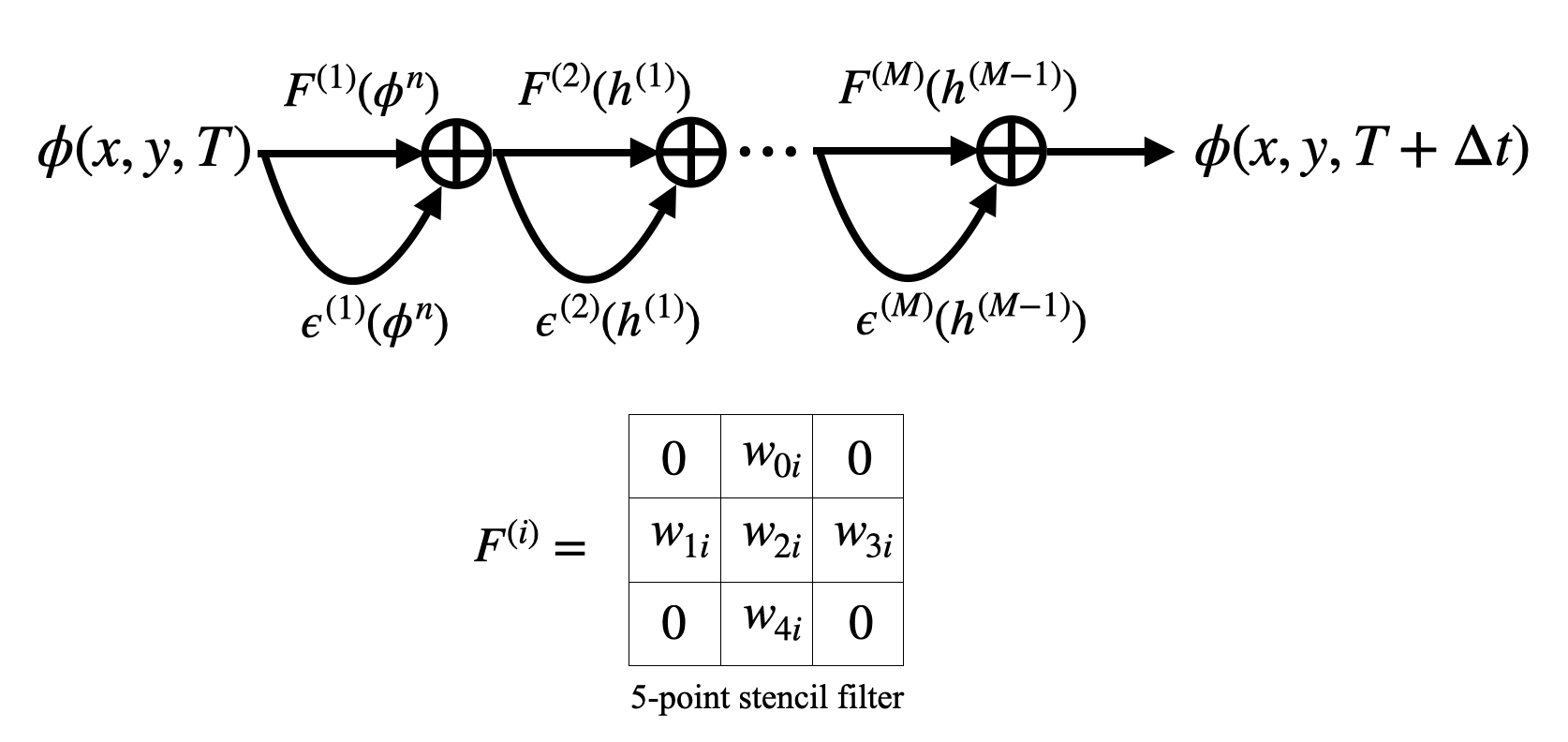}
\caption{A deep five-stencil convolutional neural network with 5-point stencil filters $\bF^{(i)}(x)$ and $r$-th order polynomial functions $\beps^{(i)}(x)=a_{i}+\sum^{r}_{k=1}a_{ki}x^k$ where $a_i, a_{ki} \in \mathbb{R}$ for all $i, k$.}%
\label{fig:fcnn}
\end{figure}

\begin{figure}[t]
\centering
\includegraphics[width=0.8\columnwidth]{./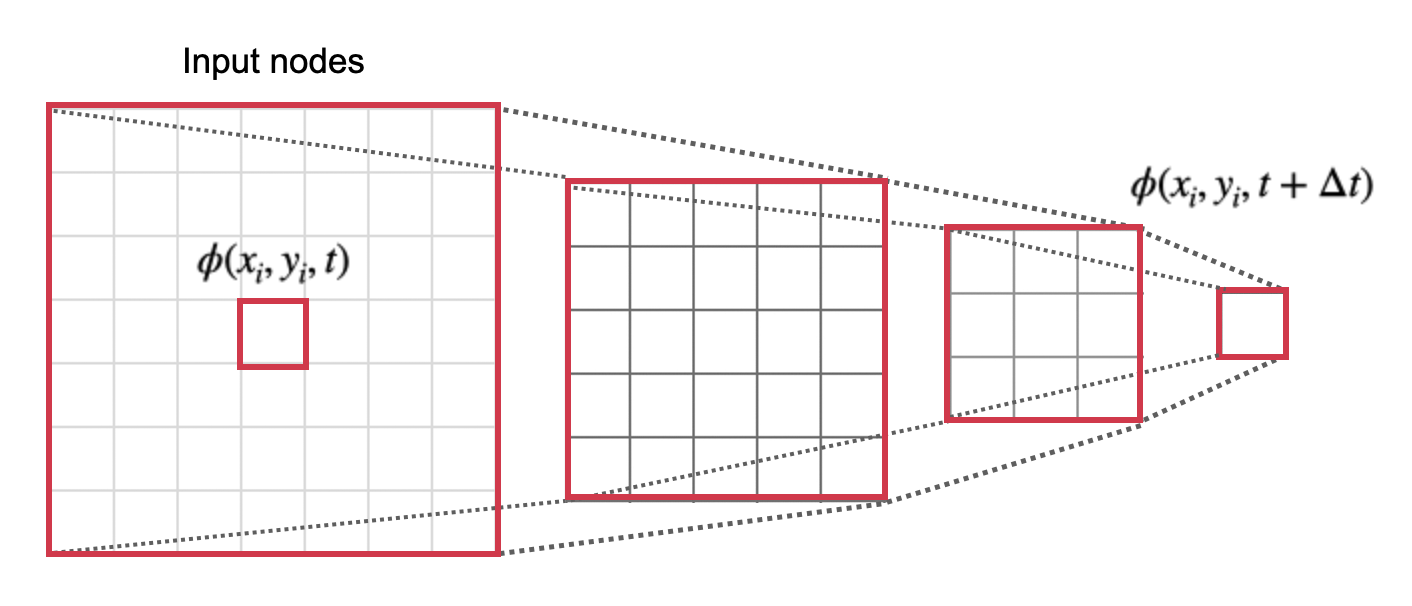}
\caption{The receptive field size of three five-point stencil convolutional layers is $7 \times 7$.}%
\label{fig:receptive}
\end{figure}

\emph{Five-point stencil convolutional neural networks} (FCNNs) \cite{fcnn} learn the explicit finite difference methods of second-order reaction-diffusion equations using only two consecutive snapshots $\phi_0$ and $\phi_1$. Even in the case that $\phi_1$ is a contaminated snapshot (injected Gaussian random noise), FCNNs are trainable, and a pretrained model can predict the time evolution of any initial conditions without requiring additional training for each initial condition. However, explicit FCNNs have a limitation in the selection of the time step $\Delta t$ as stated in Theorem \ref{stability} like explicit FDMs.

We propose \emph{deep five-point stencil convolutional neural networks} (deep FCNNs) consisting of multiple five-stencil layers to prevent the performance degradation of numerical simulations for second-order reaction-diffusion partial differential equations depending on time step sizes. A five-stencil layer has a five-stencil operator $\bF$ and a trainable $r$-th order polynomial function $\beps$ so
the deep FCNN can be described as follows:
\begin{subequations}\label{eq:deepfcnn}
\begin{align*}
&\bh^{(1)} = \bF^{(1)}(\phi(x,y,T))+\beps^{(1)}(\phi(x,y,T)) \\
&\bh^{(m)} = \bF^{(m)}(\bh^{(m-1)})+\beps^{(m)}(\bh^{(m-1)}),\,\quad m=2,\cdots, M-1, \\
&\phi(x,y,T+\Delta t) = \bF^{(M)}(\bh^{(M-1)} )+\beps^{(M)}(\bh^{(M-1)} ) \\
\end{align*}
\end{subequations}
where $\phi(x,y,T)$ is the solution at $t=T$ and $\phi(x,y,T+\Delta t)$ is the solution at $t=T+\Delta t$ as shown in Figure \ref{fig:fcnn}. The main objective of deep FCNNs is to increase the receptive field size. For instance, networks that consist of a single five-stencil operation, such as FCNNs and second-order finite difference methods, have a receptive field of $3 \times 3$. With $M$ layers, the receptive field size is $(2M+1) \times (2M+1)$ (e.g., Figure \ref{fig:receptive}). 

In the training session, we utilize two nonconsecutive snapshots $\phi_0$ and $\phi_k$ to train a deep FCNN that predicts evolutions with a larger time step $\Delta t_L$ than the time step size $\Delta t_s$ of the provided snapshots as illustrated in Algorithm \ref{alg:fcnn}. However, the design of the receptive field should be carefully considered due to potential issues with optimization problems and inference slowdowns resulting from a large number of model parameters. Therefore, it is important to select an appropriate $\phi_k$ satisfying $\Delta t_s \leq \frac{h^2}{4\alpha} \leq \Delta t_L$.

\begin{algorithm}[t]
\caption{Training a deep FCNN with two nonconsecutive snapshots $\phi_0$ and  $\phi_k$}\label{alg:fcnn}
\begin{algorithmic}
\Require $\phi_0$, $\phi_k$, $k \in \mathbb{Z}$, $\epsilon > 0$
\While{$l > \epsilon$}
\State $l(\theta) \gets \parallel \phi_k - f(\phi_0;\theta)\parallel_2^2$
\State Update $\theta$ based on the gradient of $l$
\EndWhile
\end{algorithmic}
\end{algorithm}

\section{Simulation results} \label{sec:results}
\begin{table}[t]
\caption{Relative $L_2$ errors: (HE) Heat, (FE) Fisher's, and (AC) Allen--Cahn equations}
\begin{adjustbox}{width=320pt,center}\label{table:relerr}
\begin{tabular}{*8c}
\toprule
& Initial condition & Sierra   & Star    & Circle   & Torus & Maze & Cells\\
\multirow{4}{*}{HE} &{$t=$}   & 0.006   & 0.006    & 0.006  & 0.006 & 0.006 & 0.006 \\ 
& FCNN($\Delta t_L$)  & $\mathbf{8.207\times 10^{-4}}$ & $\mathbf{2.693\times 10^{-4}}$   &  $\mathbf{2.891\times 10^{-4}}$ &  $\mathbf{5.630\times 10^{-4}}$ & $\mathbf{2.700\times 10^{-3}}$ &$\mathbf{5.897\times 10^{-4}}$  \\ 
& FDM($\Delta t_s$)  & $8.561\times 10^{-4}$ & $2.747\times 10^{-4}$  & $2.959\times 10^{-4}$  & $5.734\times 10^{-4}$ & $2.704\times 10^{-3}$ &$5.968\times 10^{-4}$\\

\midrule
\multirow{4}{*}{FE} & {$t=$}   & 0.006   & 0.006    & 0.006  & 0.003 & 0.006 & 0.006\\
& FCNN($\Delta t_L$)  &  $\mathbf{2.745\times 10^{-3}}$ &$\mathbf{3.306\times 10^{-2}}$  & $\mathbf{3.581\times 10^{-2}}$  & $\mathbf{2.457\times 10^{-3}}$ & $\mathbf{1.561\times 10^{-2}}$&$\mathbf{3.205\times 10^{-2}}$\\
& FDM($\Delta t_s$)  &  $3.188\times 10^{-3}$ & $3.423\times 10^{-2}$   &$3.703\times 10^{-2}$ & $2.675\times 10^{-3}$ & $1.646\times 10^{-2}$&$3.321\times 10^{-2}$\\

\midrule
\multirow{4}{*}{AC} & {$t=$}   & 0.006   & 0.006    & 0.006  & 0.006 & 0.006& 0.006\\ 
& FCNN($\Delta t_L$)  &  $\mathbf{2.618\times 10^{-2}}$ & $\mathbf{4.812\times 10^{-4}}$  & $4.296\times 10^{-5}$  &$\mathbf{3.801\times 10^{-5}}$ & $1.144\times 10^{-3}$&$6.766\times 10^{-4}$\\
& FDM($\Delta t_s$)  &  $5.419\times 10^{-2}$ & $4.812\times 10^{-4}$   &$4.296\times 10^{-5}$ & $3.802\times 10^{-5}$ & $1.144\times 10^{-3}$& $6.766\times 10^{-4}$\\
\bottomrule
\end{tabular}
\end{adjustbox}
\end{table}

Reaction-diffusion equations are commonly used to various phenomena such as pattern formation \cite{AMVT1991,LGSPSC2021,RAVG2021}, bacterial branching growth \cite{IGYKICEB1998}, epidemic model \cite{HMXAZTLZ2018,XLZY2022}, traffic flow \cite{MF2000}, and so on. We consider the heat equation (HE), Fisher's equation (FE), and Allen--Cahn equation (AC). The governing equations are as follows:
\begin{itemize}
\item Heat equation(HE): \begin{align}
\phi_t&=\alpha\bigtriangleup\phi. \label{heateq}
\end{align}
\item Fisher's equation(FE): \begin{align}
\phi_t&=\alpha\bigtriangleup\phi + \beta(\phi-\phi^2). \label{fseq}
\end{align}
\item Allen--Cahn equation(AC): \begin{align}
\phi_t&=\alpha\bigtriangleup\phi+ \beta(\phi-\phi^3). \label{aceq}
\end{align}
\end{itemize}
where $\alpha$ is a diffusion coefficient and $\beta$ is a reaction coefficient.

In the FDM sense, the Eqs. \eqref{heateq}-\eqref{aceq} can be discretized as Eqs. \eqref{heateqdis}-\eqref{acateqdis} respectively.
\begin{align}
\phi_{ij}^{n+1}&=\phi_{ij}^{n} + \Delta t \alpha \bigtriangleup_h \phi_{ij}^{n}, \label{heateqdis}\\
\phi_{ij}^{n+1}&=\phi_{ij}^{n} + \Delta t(\alpha \bigtriangleup_h \phi_{ij}^{n} + \beta(\phi_{ij}^n-(\phi_{ij}^n)^2)), \label{feateqdis}\\
\phi_{ij}^{n+1}&=\phi_{ij}^{n} + \Delta t(\alpha \bigtriangleup_h \phi_{ij}^{n} + \beta(\phi_{ij}^n-(\phi_{ij}^n)^3)). \label{acateqdis}
\end{align}
The $\phi_{ij}^n$ is the approximation of $\phi(x_i,y_j,n\Delta t)$ and $\Delta t$ is a time step size. The Laplacian $ \bigtriangleup_h$ can be numerically computed as follows:
\begin{equation}
\bigtriangleup_h \phi_{ij}=\frac{\phi_{i+1,j}+\phi_{i-1,j}+\phi_{i,j+1}+\phi_{i,j-1}-4\phi_{ij}}{h^2}, \quad 1\leq i \leq N_x, 1\leq j \leq N_y \nonumber
\end{equation}
on the computational domain $\Omega=(a,b)\times(c,d)$ with uniform mesh size($h=(b-a)/N_x=(d-c)/N_y: N_x$ and $N_y$ are the number of grid points.)

We use the zero Neumann boundary condition: 
\begin{align}
\phi_{0j}&=\phi_{1j}, ~~\phi_{N_xj}=\phi_{N_{x-1}j}, \quad \textrm{for}~~ j=1,2,\ldots,N_y, \nonumber \\ 
\phi_{i0}&=\phi_{i1}, ~~\phi_{iN_y}=\phi_{iN_{y-1}}, ~~\quad \textrm{for}~~ i=1,2,\ldots,N_x. \nonumber
\end{align}

Initially, we set the diffusion and reaction coefficients of the HE, FE, and AC equations to $(\alpha,\beta)=(1,0),(1,100),(1,6944)$, respectively and define the domain as $100\times 100$ rectangle grids on $\Omega=(0,1)\times (0,1)$. Thus, the threshold of the time step $\frac{h^2}{4\alpha}$ is $2.5\times 10^{-5}$. Here, we use $\Delta t_s=2\times 10^{-5}$, which is smaller than the threshold, and $\Delta t_{L}=6\times 10^{-5}$, which is larger than the threshold, to simulate FDMs and FCNNs. the training snapshots are generated by the discretized Eqs. \eqref{heateqdis}-\eqref{acateqdis}. Also, we build deep FCNNs consisting of three layers with 0th, 2nd, and 3rd polynomial functions for the HE, FE, and AC equations respectively. To train the models, the initial conditions are generated by a uniform distribution on the interval $[ -1,1]$.

For the error metric, the relative $L_2$ error is defined as
\begin{equation*}\label{eq:rel2}
\frac{\parallel \phi - \phi_{ref} \parallel_2}{\parallel \phi_{ref} \parallel_2}.
\end{equation*}
where the references ($\phi_{ref}$) are the solutions obtained by the explicit finite difference method with the time step $\frac{\Delta t_s}{100}$.
We define six initial shapes as follows:
\begin{itemize}
\item Sierra: \begin{align*}
\phi(x,y,0)=\cos(2\pi x)\cos(2\pi y)
\end{align*}
\item Star: \begin{align*}
\phi(x,y,0)= \tanh\left(\frac{0.25+0.1\cos(6\theta)-C}{\sqrt{2}\rho}\right)
\end{align*}
\item Circle: \begin{align*}
\phi(x,y,0)= \tanh\left(\frac{0.25-C}{\sqrt{2}\rho}\right)
\end{align*}
\item Torus: \begin{align*}
\phi(x,y,0)= -1+\tanh\left(\frac{0.4-C}{\sqrt{2}\rho}\right)-\tanh\left(\frac{0.3-C}{\sqrt{2}\rho}\right)
\end{align*}
\item Maze: \begin{align*}
\phi(x,y,0) \text{ is manually constructed. (see our code)}
\end{align*}
\item Cells: \begin{align*}
\phi(x,y,0) \text{ is described by three circles using the circle function.} 
\end{align*}
\end{itemize}
where $C=\sqrt{(x-0.5)^2+(y-0.5)^2}$, $\rho\approx 0.012$ is the thickness of the transition layer \cite{accnn}, and
$\theta=\tan^{-1}((y-0.5)/(x-0.5))$ if $x>0.5$; otherwise, $\theta=\pi+\tan^{-1}((y-0.5)/(x-0.5))$. The source code is available from https://github.com/kimy-de/deepfcnn.

Table \ref{table:relerr} shows that the deep FCNNs with $\Delta t_L$ are comparable to the FDMs with $\Delta t_s$ in all the results. In contrast, the FDMs blow up when $\Delta t_L$ is used to predict their time evolutions as shown in (d) of Figure \ref{fig:he}, \ref{fig:fe}, and \ref{fig:ac}. 

Figure \ref{fig:he} depicts the time evolution of the six unseen shapes for the heat equation. It is observed that the FDMs with $\Delta t_L$ blow up while the deep FCNNs yield stable solutions in all the cases. Figure \ref{fig:fe}, and \ref{fig:ac} display the time evolution of the six unseen shapes for the FE and AC equations respectively. With $\Delta t_L$, the FDMs blow up, whereas the deep FCNNs predict the evolution comparable to the results of the FDMs with $\Delta t_s$.

\begin{figure}[htbp]
\begin{minipage}{0.99\linewidth}
\centering
\includegraphics[width=4.3in]{./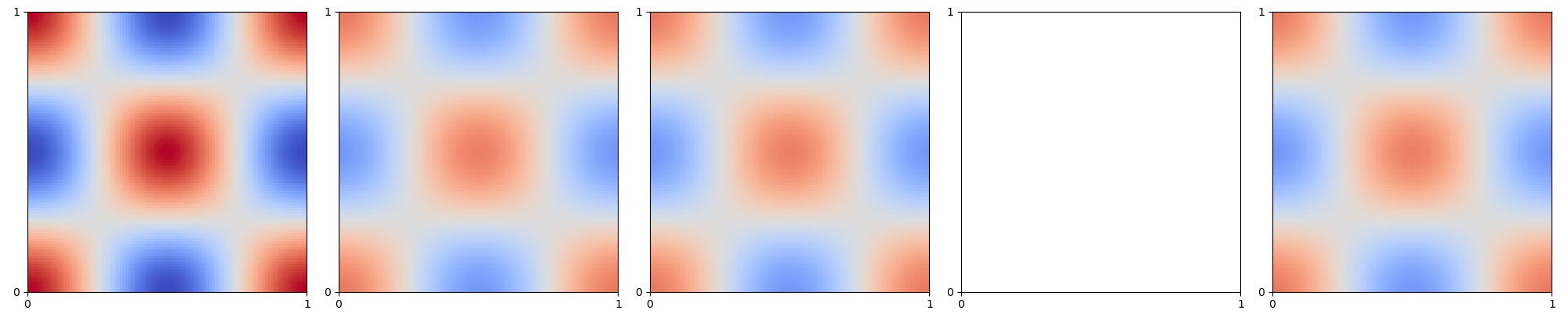}  \\
\end{minipage}\\
\begin{minipage}{0.99\linewidth}
\centering
\includegraphics[width=4.3in]{./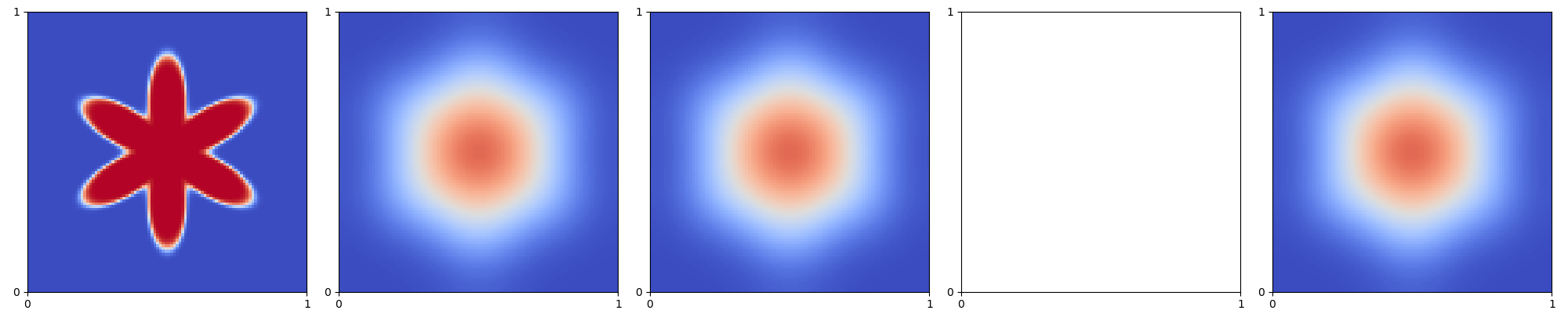} \\
\end{minipage}\\
\begin{minipage}{0.99\linewidth}
\centering
\includegraphics[width=4.3in]{./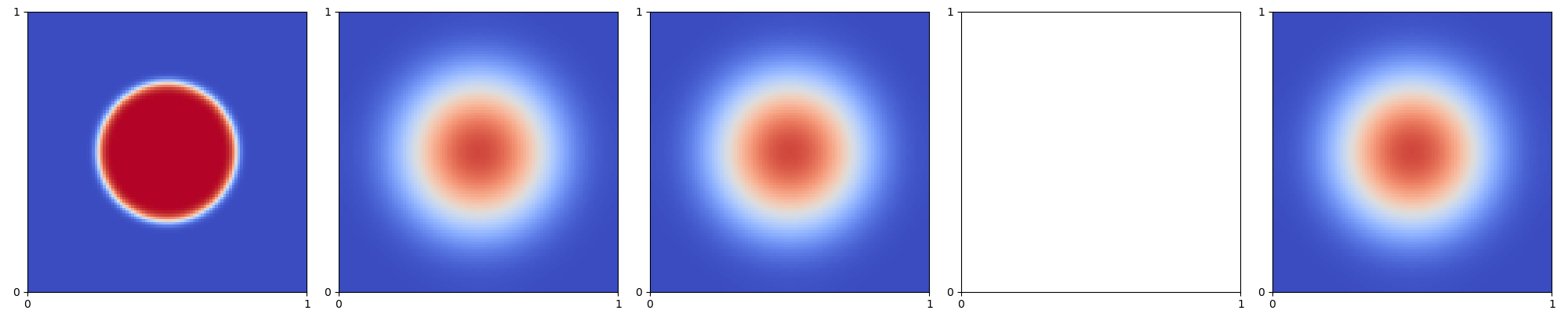} \\
\end{minipage}\\
\begin{minipage}{0.99\linewidth}
\centering
\includegraphics[width=4.3in]{./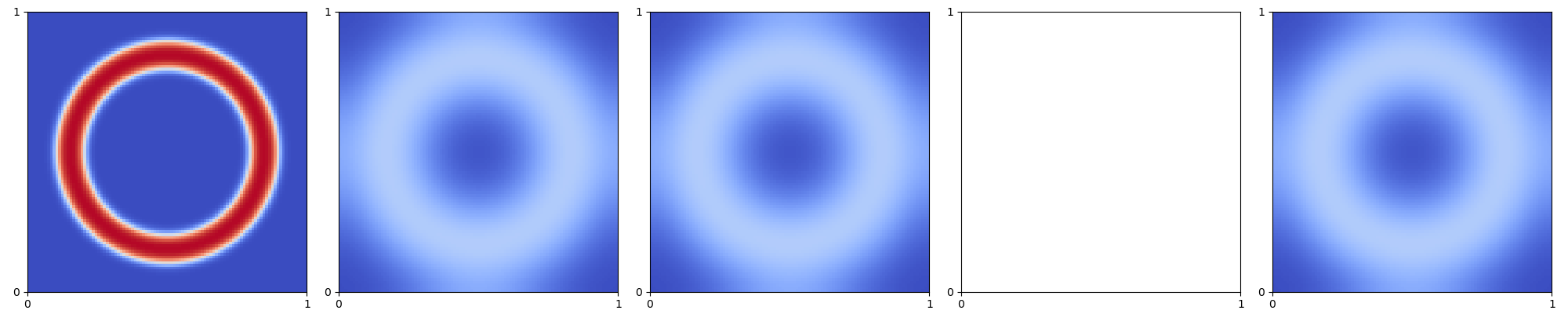} \\
\end{minipage}\\
\begin{minipage}{0.99\linewidth}
\centering
\includegraphics[width=4.3in]{./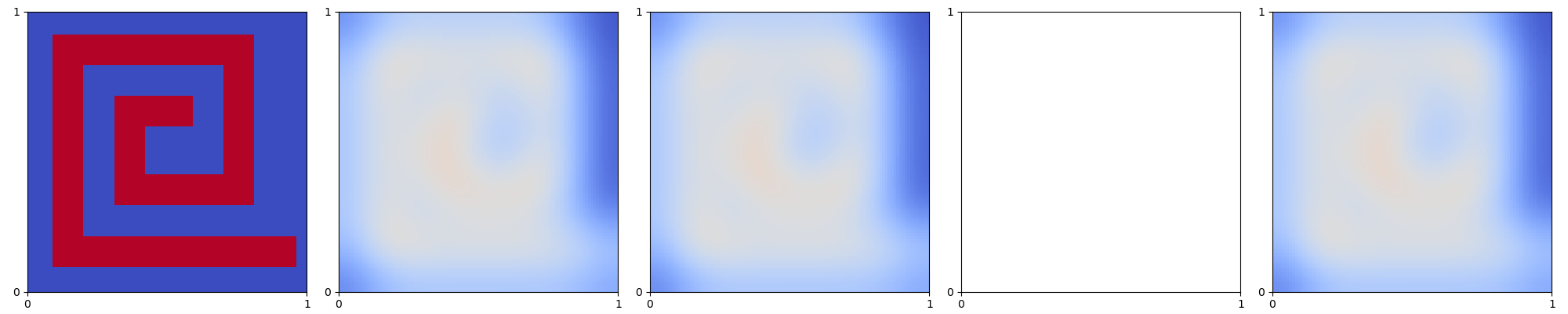} \\
\end{minipage}\\
\begin{minipage}{0.99\linewidth}
\centering
\includegraphics[width=4.3in]{./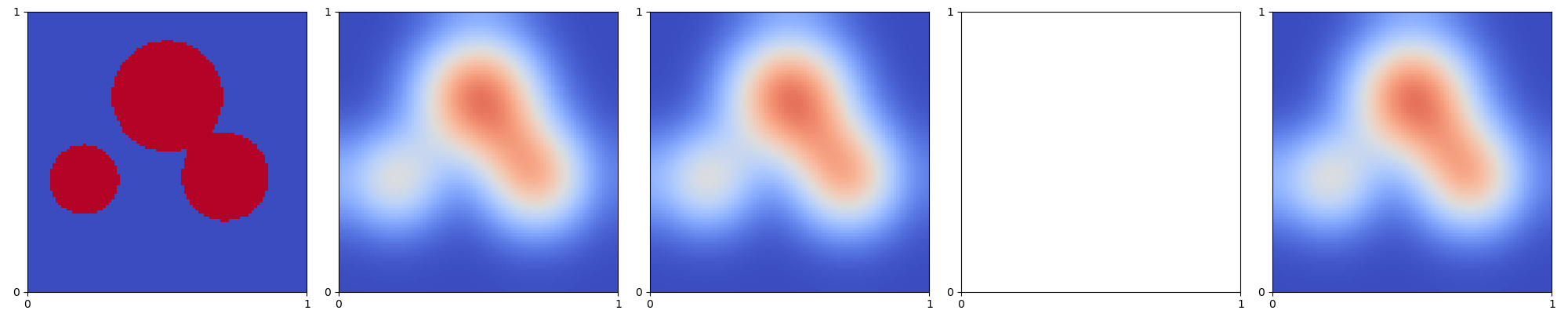} \\
\end{minipage}\\
\begin{minipage}{0.99\linewidth}
\centering
(a) \qquad \qquad (b) ~~ \qquad \qquad (c) \qquad  \qquad \quad (d) \qquad \qquad~~ (e)
\end{minipage}
\caption{Heat equation: (a) initial conditions, (b) reference solutions, (c) FDM results with $\Delta t_s$, (d) FDM results with $\Delta t_L$, and (e) FCNN results with $\Delta t_L$.}
\label{fig:he}
\end{figure}

\begin{figure}[htbp]
\begin{minipage}{0.99\linewidth}
\centering
\includegraphics[width=4.3in]{./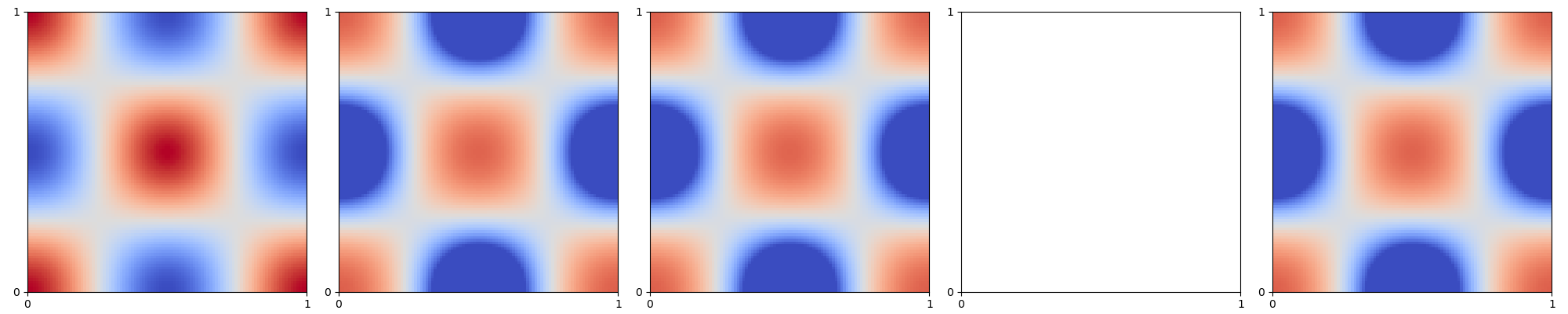}  \\
\end{minipage}\\
\begin{minipage}{0.99\linewidth}
\centering
\includegraphics[width=4.3in]{./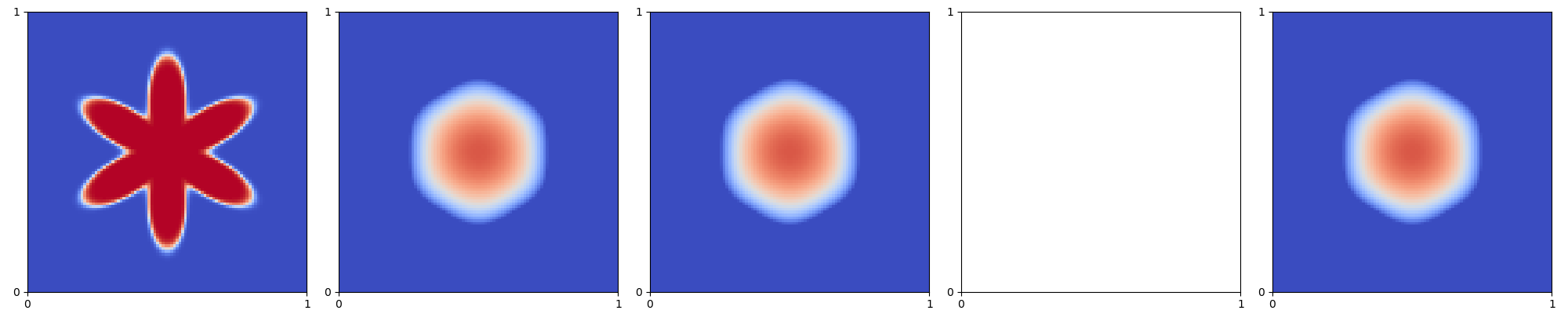} \\
\end{minipage}\\
\begin{minipage}{0.99\linewidth}
\centering
\includegraphics[width=4.3in]{./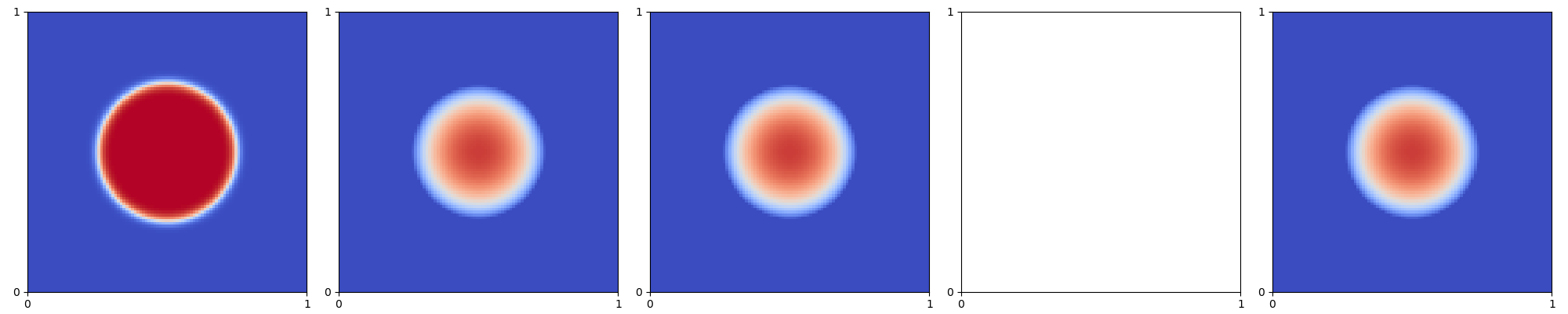} \\
\end{minipage}\\
\begin{minipage}{0.99\linewidth}
\centering
\includegraphics[width=4.3in]{./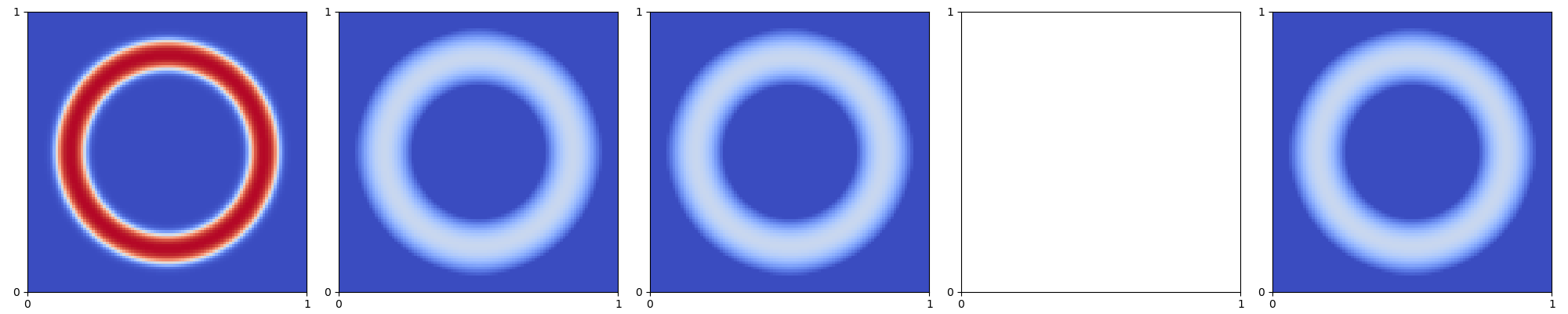} \\
\end{minipage}\\
\begin{minipage}{0.99\linewidth}
\centering
\includegraphics[width=4.3in]{./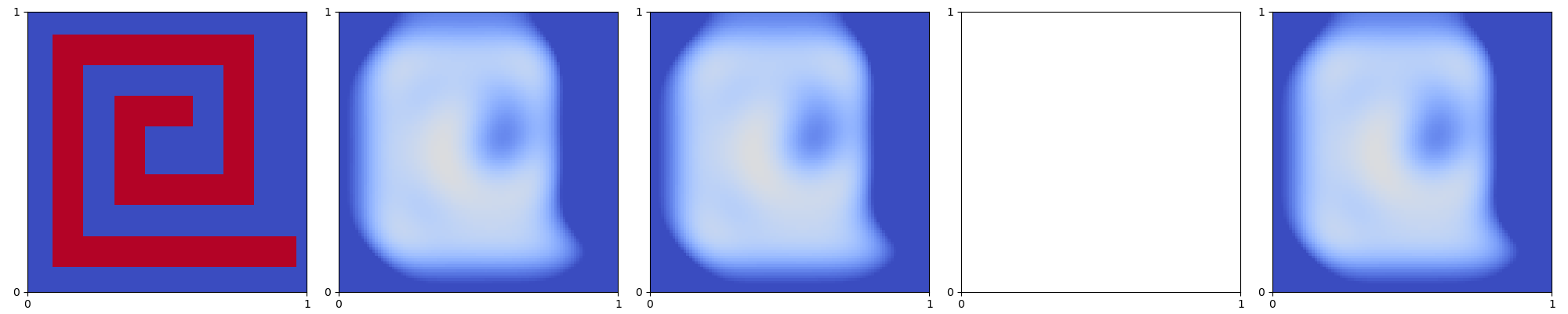} \\
\end{minipage}\\
\begin{minipage}{0.99\linewidth}
\centering
\includegraphics[width=4.3in]{./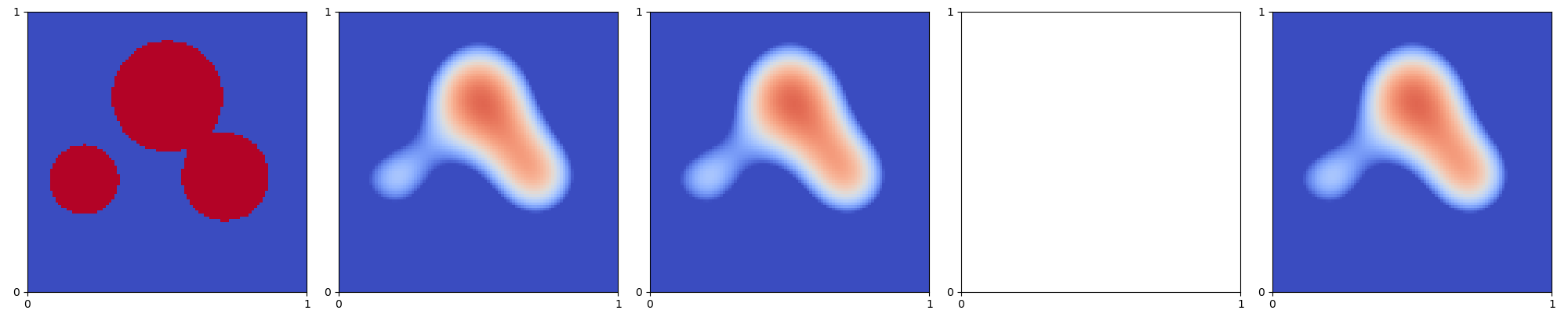} \\
\end{minipage}\\
\begin{minipage}{0.99\linewidth}
\centering
(a) \qquad \qquad (b) ~~ \qquad \qquad (c) \qquad  \qquad \quad (d) \qquad \qquad~~ (e)
\end{minipage}
\caption{Fisher's equation: (a) initial conditions, (b) reference solutions, (c) FDM results with $\Delta t_s$, (d) FDM results with $\Delta t_L$, and (e) FCNN results with $\Delta t_L$.}
\label{fig:fe}
\end{figure}

\begin{figure}[htbp]
\begin{minipage}{0.99\linewidth}
\centering
\includegraphics[width=4.3in]{./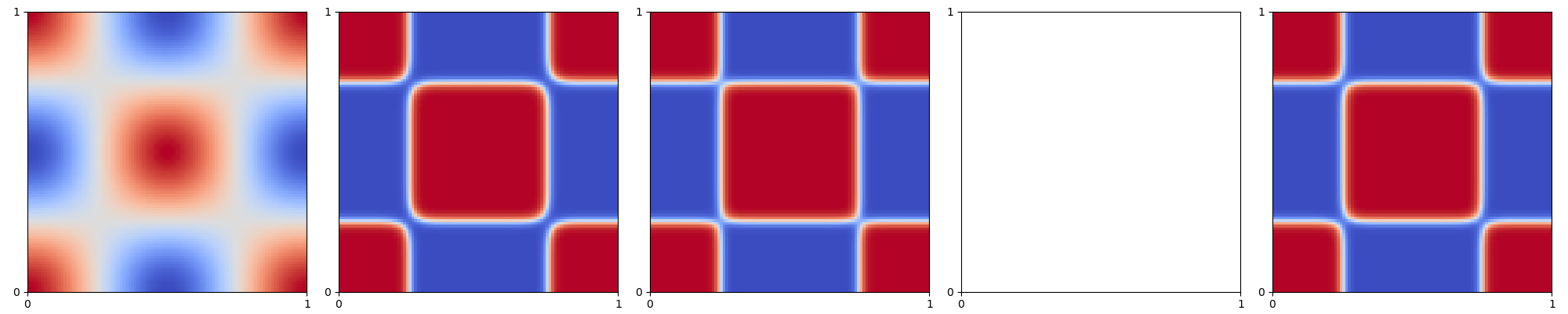}  \\
\end{minipage}\\
\begin{minipage}{0.99\linewidth}
\centering
\includegraphics[width=4.3in]{./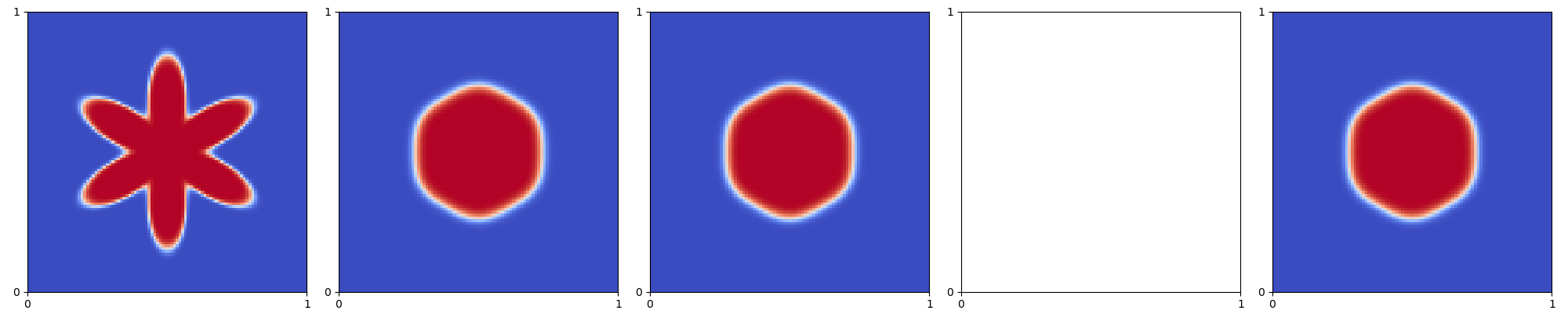} \\
\end{minipage}\\
\begin{minipage}{0.99\linewidth}
\centering
\includegraphics[width=4.3in]{./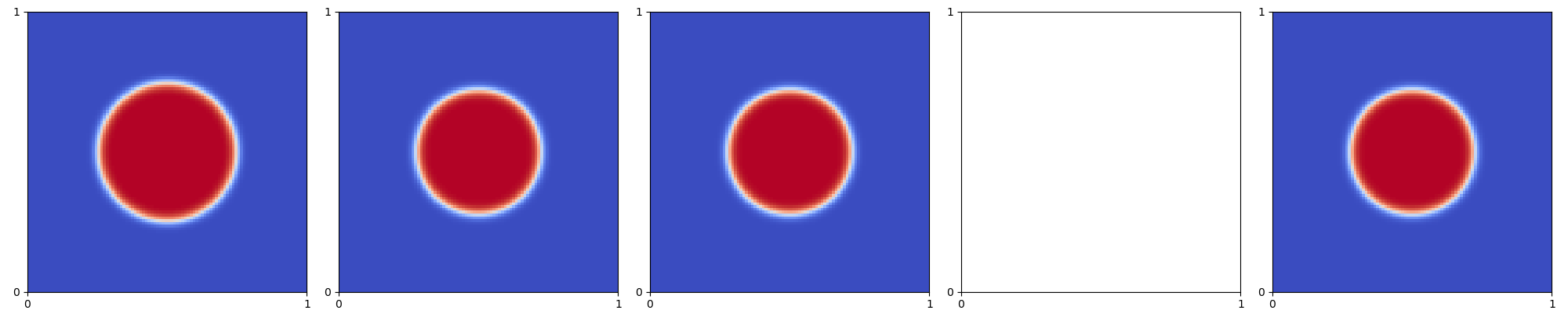} \\
\end{minipage}\\
\begin{minipage}{0.99\linewidth}
\centering
\includegraphics[width=4.3in]{./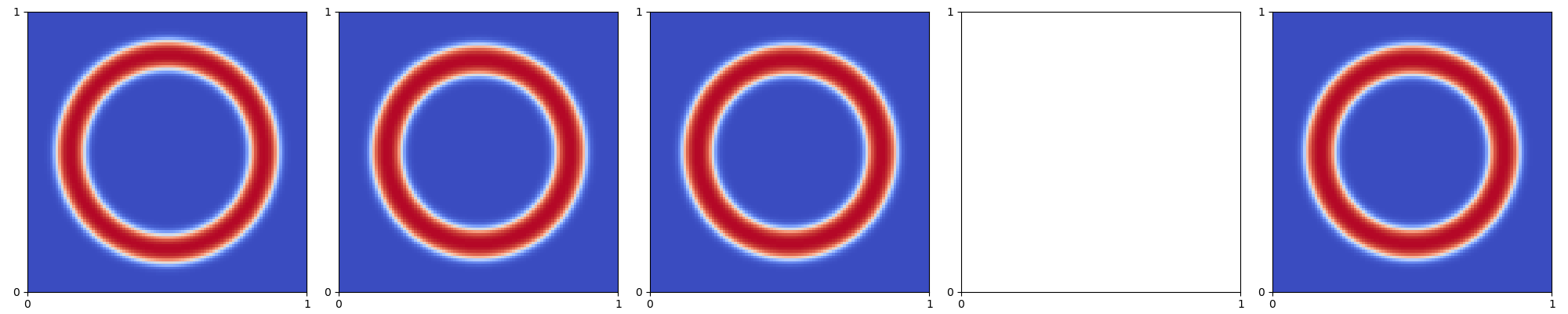} \\
\end{minipage}\\
\begin{minipage}{0.99\linewidth}
\centering
\includegraphics[width=4.3in]{./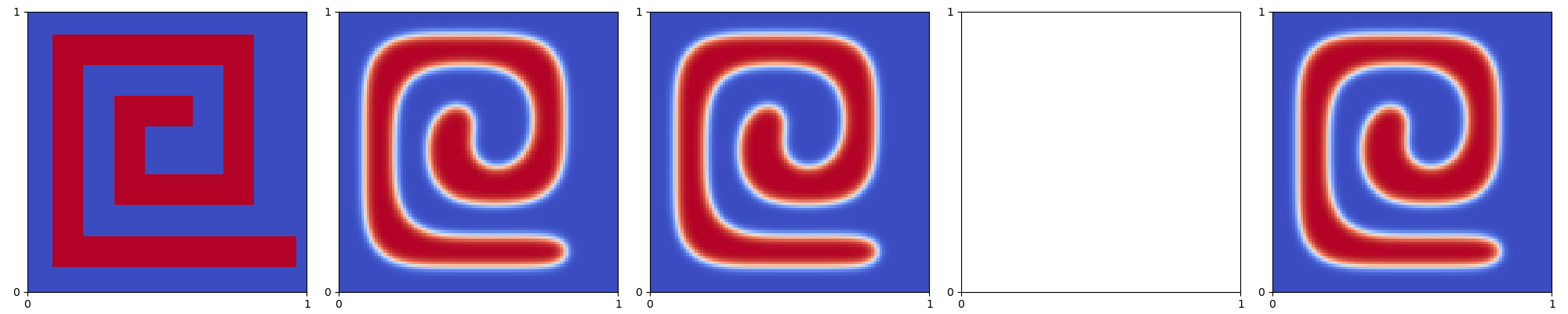} \\
\end{minipage}\\
\begin{minipage}{0.99\linewidth}
\centering
\includegraphics[width=4.3in]{./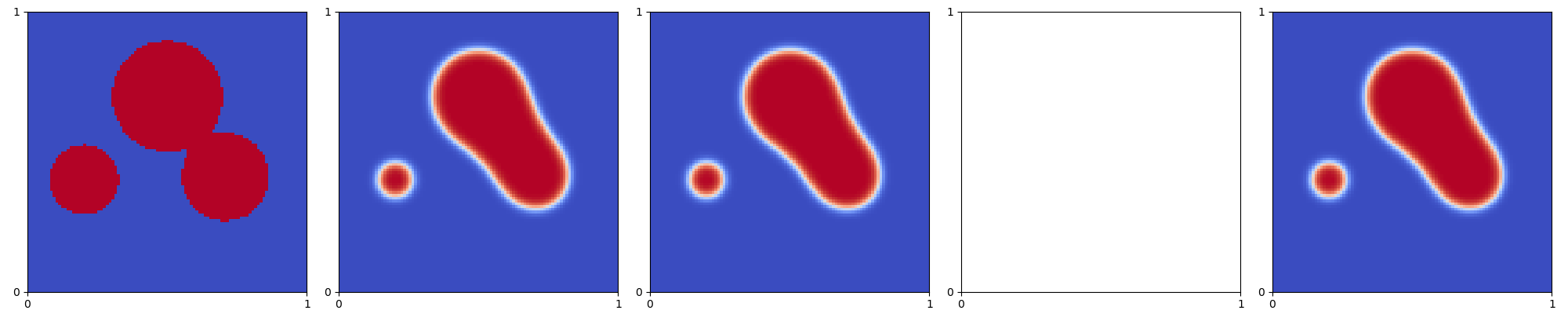} \\
\end{minipage}\\
\begin{minipage}{0.99\linewidth}
\centering
(a) \qquad \qquad (b) ~~ \qquad \qquad (c) \qquad  \qquad \quad (d) \qquad \qquad~~ (e)
\end{minipage}
\caption{Allen-Cahn equation: (a) initial conditions, (b) reference solutions, (c) FDM results with $\Delta t_s$, (d) FDM results with $\Delta t_L$, and (e) FCNN results with $\Delta t_L$.}
\label{fig:ac}
\end{figure}


Furthermore, we simulate the energy dissipation and maximum principle for the AC equation. The AC Eq. \eqref{aceq} follows the energy dissipation law derived from 
\begin{equation}
\mathcal{E}(\phi)=\int_{\Omega} \left( \frac{F(\phi)}{\epsilon^2} + \frac{1}{2} |\nabla \phi |^2 \right) d\textbf{x}. \label{ACenergyEQ}
\end{equation}
Then $\mathcal{E}(\phi)$ is decreasing in time
\begin{equation}
\frac{\partial}{\partial t}\mathcal{E}(\phi)=-\int_{\Omega} \left| \frac{\partial \phi}{\partial t} \right|^2 d\textbf{x}  \leq 0.
\end{equation}
where $\Omega=(0,1)\times(0,1), N_x=N_y=100, h=1/100$, $\Delta t=\Delta t_L$, and the transition layer thickness is $\epsilon_m=\frac{hm}{2\sqrt{2}tanh^{-1}(0.9)}$ with $m=5$. To check the discrete energy, we rewrite the Eq. \eqref{ACenergyEQ} as $\mathcal{E}^d(\phi)$. Figure \ref{fig_energyDec} shows the surf plots and discrete energy associated with the predictions of the pretrained deep FCNN for the AC equation. The initial condition is defined as $\phi(x,y,0)=rand(x,y)$ which generates random values in the range of $[-1,1]$.


\begin{figure}[htbp]
\begin{minipage}{0.24\linewidth}
\centering
\includegraphics[clip=true,width=1.3in]{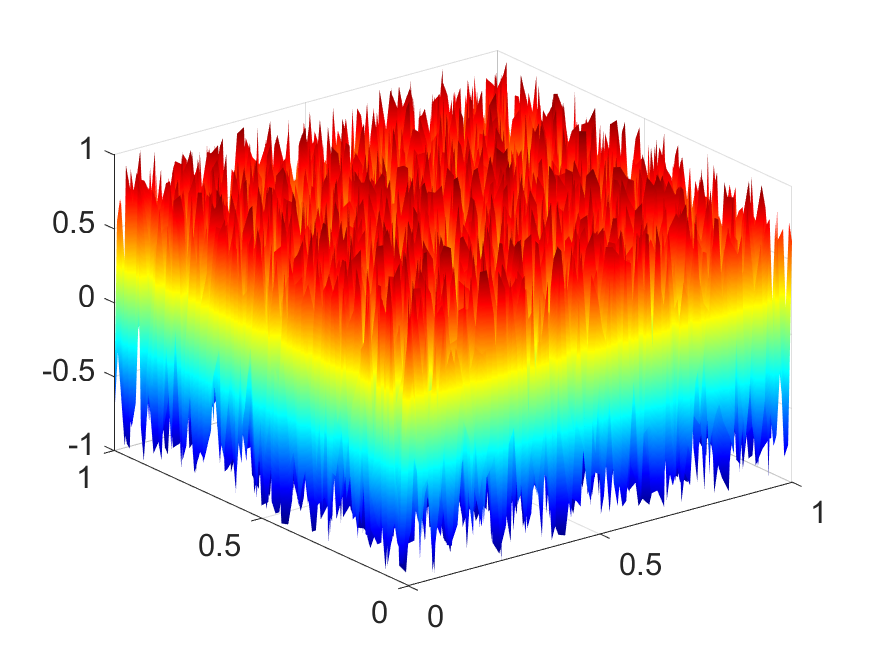}\\
(a) $t=0$
\end{minipage}
\begin{minipage}{0.24\linewidth}
\centering
\includegraphics[clip=true,width=1.3in]{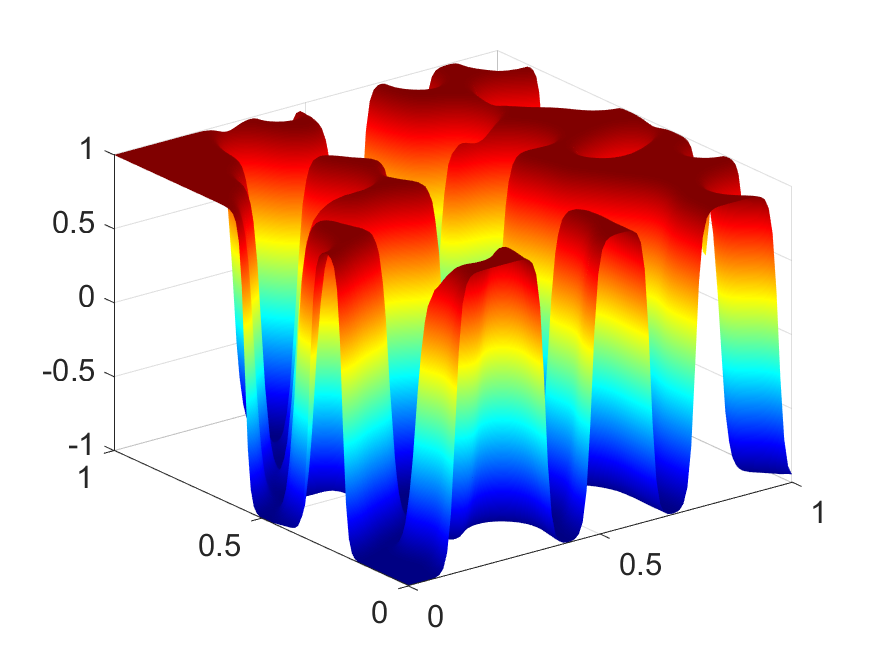}\\
(b) $t=0.0018$
\end{minipage}
\begin{minipage}{0.24\linewidth}
\centering
\includegraphics[clip=true,width=1.3in]{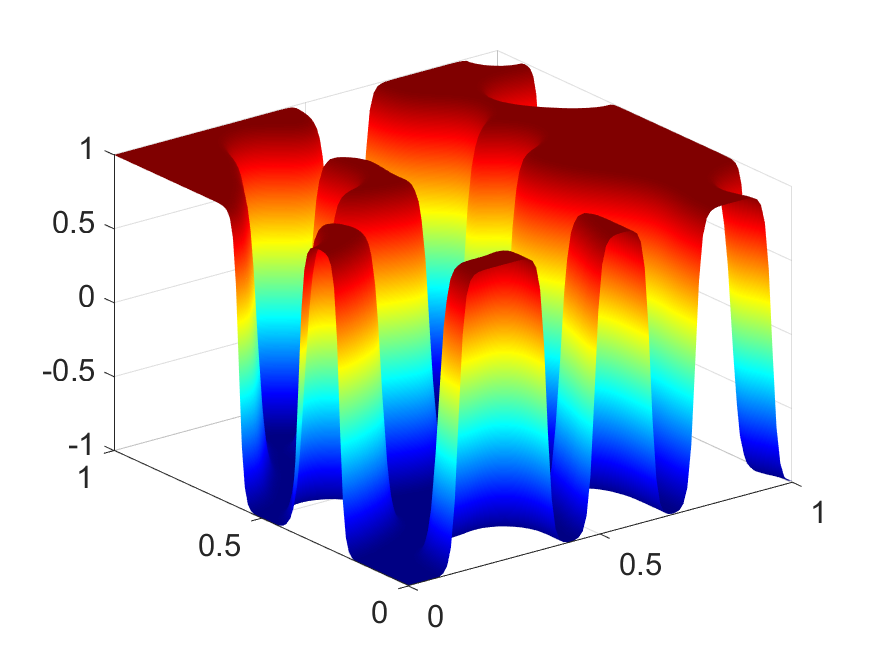}\\
(c) $t=0.003$
\end{minipage}
\begin{minipage}{0.24\linewidth}
\centering
\includegraphics[clip=true,width=1.3in]{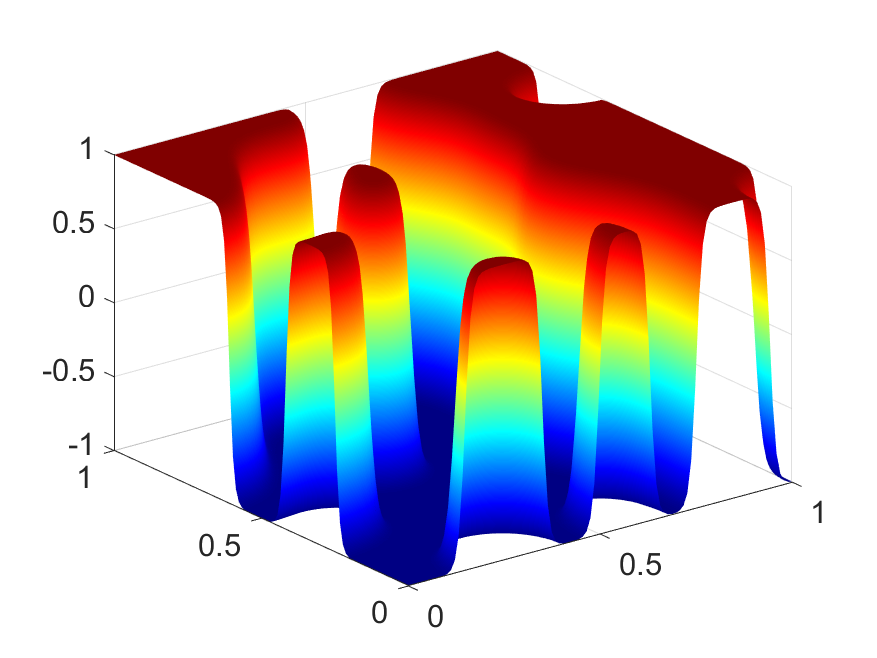}\\ 
(d) $t=0.006$
\end{minipage}\\
\begin{minipage}{0.5\linewidth}
\centering
\includegraphics[clip=true,width=2.4in]{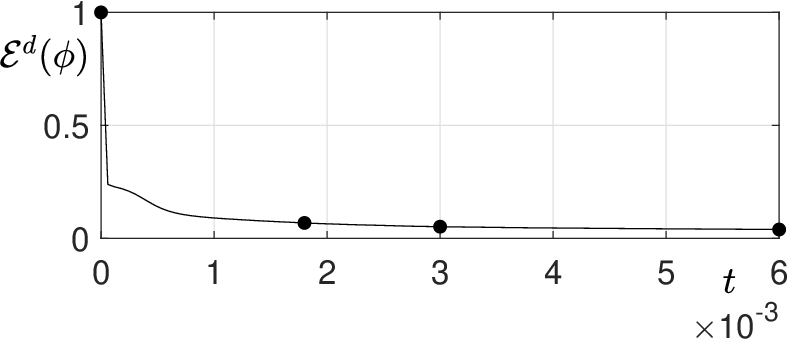}\\
(e)
\end{minipage}
\begin{minipage}{0.45\linewidth}
\centering
\includegraphics[clip=true,width=2.1in]{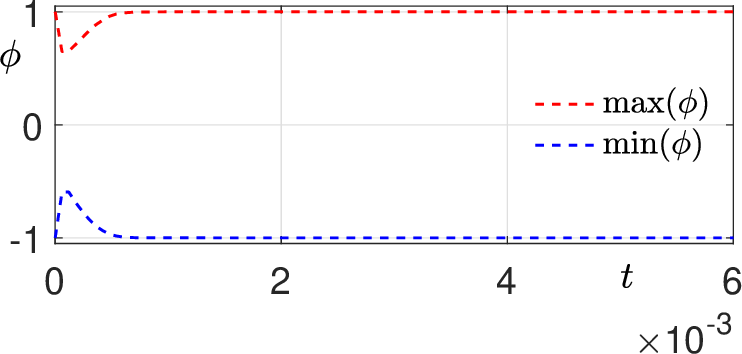}\\
(f)
\end{minipage}
\caption{(a)-(d) Surf plots at the marked points from the left to the right of the energy graph (e). (e) Time dependent normalized discrete total energy $\mathcal{E}^d(\phi(t))/\mathcal{E}^d(\phi(0))$, and (f) Maximum and minimum values of $\phi$ over time evolution.} \label{fig_energyDec} 
\end{figure}

\section{Conclusions} \label{sec:discu}
The use of multiple layers creates large receptive fields capable of accommodating many features that impact each output node.  We proposed deep five-point stencil convolutional neural networks that allow the application of time steps larger than the threshold introduced in the stability analysis. We demonstrated that while the FDMs blow up when their time steps exceed the threshold, deep FCNNs accurately predict the time evolution of six different initial conditions for the heat, Fisher's, and Allen-Cahn equations. Additionally, we showed that deep FCNNs can be trained using only two nonconsecutive snapshots. 

In future work, our research directions could explore the feasibility of deep FCNNs with physics-informed losses that do not require any observations during training. Another approach could involve focusing on the structural re-parameterization of deep FCNNs to improve computational efficiency, as the deep architecture may lead to a reduction in inference speed. Given that the five-point stencil convolution is linear and the composition of the convolutions is also linear, the deep FCNN designed for the heat equation can be structurally re-parameterized into a single-layer perceptron which is faster and retains accuracy. On the other hand, the structural re-parameterization of other equations poses a challenge.

\section*{Acknowledgment}
The corresponding author (Y. Choi) was supported by Basic Science Research Program through the National Research Foundation of Korea(NRF) funded by the Ministry of Education(2022R1I1A307282411). 

\section*{References}

\end{document}